\definecolor{darkblue}{rgb}{0, 0.08, 0.45}
\theoremstyle{plain}
\newtheorem{proposition}{Proposition}
\theoremstyle{definition}
\newtheorem{definition}{Definition}
\title{Anomaly Classification with the Anti-Profile Support Vector Machine}
\author{
Wikum Dinalankara\\
Center for Bioinformatics and Computational Biology\\
Department of Computer Science\\
University of Maryland\\
College Park, MD 20742\\
\texttt{wikum@cs.umd.edu} \\
\And
H\'{e}ctor Corrada Bravo\\
Center for Bioinformatics and Computational Biology\\
Department of Computer Science\\
University of Maryland\\
College Park, MD 20742\\
\texttt{hcorrada@umiacs.umd.edu}
}
\begin{document}
\bibliographystyle{plainnat}

\maketitle

\begin{abstract}
We introduce the anti-profile Support Vector Machine (apSVM) as a novel
algorithm to address the anomaly classification problem, an extension
of anomaly detection where the goal is to distinguish data samples
from a number of anomalous and heterogeneous classes based on their
pattern of deviation from a normal stable class. 
We show
that under heterogeneity assumptions defined here that the apSVM can
be solved as the dual of a standard SVM with an indirect kernel that measures similarity
of anomalous samples through similarity to the stable normal
class. We characterize this indirect kernel as the
inner product in a Reproducing Kernel Hilbert Space between
representers that are projected to the subspace spanned by the
representers of the normal samples. We show by simulation and
application to cancer genomics datasets that the anti-profile SVM
produces classifiers that are more accurate and stable
than the standard SVM in the anomaly classification setting.

\end{abstract}

\section{Introduction}

The task of anomaly, or outlier, detection~\cite{Steinwart:2005:CFA:1046920.1058109,Manevitz,Chandola} is to identify data samples
that deviate significantly from a class for which training samples are available. We explore anomaly classification as an extension to this setting, where the goal is to distinguish data samples from a number of anomalous and heterogeneous classes based on their pattern of deviation from a normal stable class. Specifically, presented with samples from a normal class, along with samples from 2 or more anomalous classes, we want to train a classifier to distinguish samples from the anomalous classes. Since the anomalous classes are heterogeneous using deviation from the normal class as the basis of classification instead of building a classifier for the anomalous classes that ignores samples from the normal class may lead to classifiers and results that are more stable and reproducible.

The motivation for exploring this learning setting is from recent
results in cancer genomics~\cite{Hansen:2011gu}. In particular, it was
shown that hyper-variability in certain genomic measurements (DNA
methylation and gene expression) in specific regions is a stable
cancer mark across many tumor types. Furthermore, this
hyper-variability increases during stages of cancer progression. This
led us to the question of how to distinguish samples from different
stages in the presence of hyper-variability. In essence, how to
distinguish samples from different anomalous classes (given by cancer progression stage) based on deviation from a well-defined normal class (measurements from non-cancerous samples).

We introduce the anti-profile Support Vector Machine (apSVM) as a novel algorithm suitable for the anomaly classification task. It is based on the idea of only using the stable normal class to define basis functions over which the classifier is defined. We show that the dual of the apSVM optimization problem is the same as the dual of the standard SVM with a modified kernel function. We then show that this modified kernel function has general properties that ensure better stability than the standard SVM in the anomaly classification task.

The paper is organized as follows: we first present the anomaly classification setting in detail; we next describe the Anti-Profile Support Vector Machine (apSVM), and show that the dual of the optimization problem defined by it is equivalent to the dual of the standard SVM with a specific kernel modification; we next show that this kernel modification leads directly to a theoretical statement of the stability of the apSVM compared to the standard SVM in the anomaly classification setting;
we next show simulation results describing the performance and
stability of the apSVM; and finally, we present results from cancer
genomics showing the benefit of approaching classification problems in
this area from the anomaly classification point of view.

\section{The anomaly classification problem}

We present the anomaly classification problem in the binary case, with
two anomalous classes. Assume we are given training samples in
$\mathbb{R}^p$ from three classes: $m$ datapoints from normal class
$Z$, and $n$ training datapoints as pairs $\langle
x_1,y_1\rangle,\ldots…, \langle x_n,y_n \rangle$ with labels
$y_i \in \{-1,1\}$ indicating membership of $x_i$ in one of two
anomalous classes $A^-$ and $A^+$. Furthermore, we assume that the
anomalous classes are heterogeneous with respect to normal class
$Z$. Figure 1a illustrates this learning setting for DNA methylation
data~\cite{Hansen:2011gu} (see \ref{section:bio} for details on this aspect
of cancer epigenetics). It is a two-dimensional embedding (using PCA)
of DNA methylation data for normal colon tissues along with benign
growths (adenomas) and cancerous growths (tumor). Variability in these
specific measurements increases from normal to adenoma to tumor. We would like to build stable and robust classifiers that distinguish benign growths from tumors.

\begin{figure}
\centering
\includegraphics[width=\textwidth]{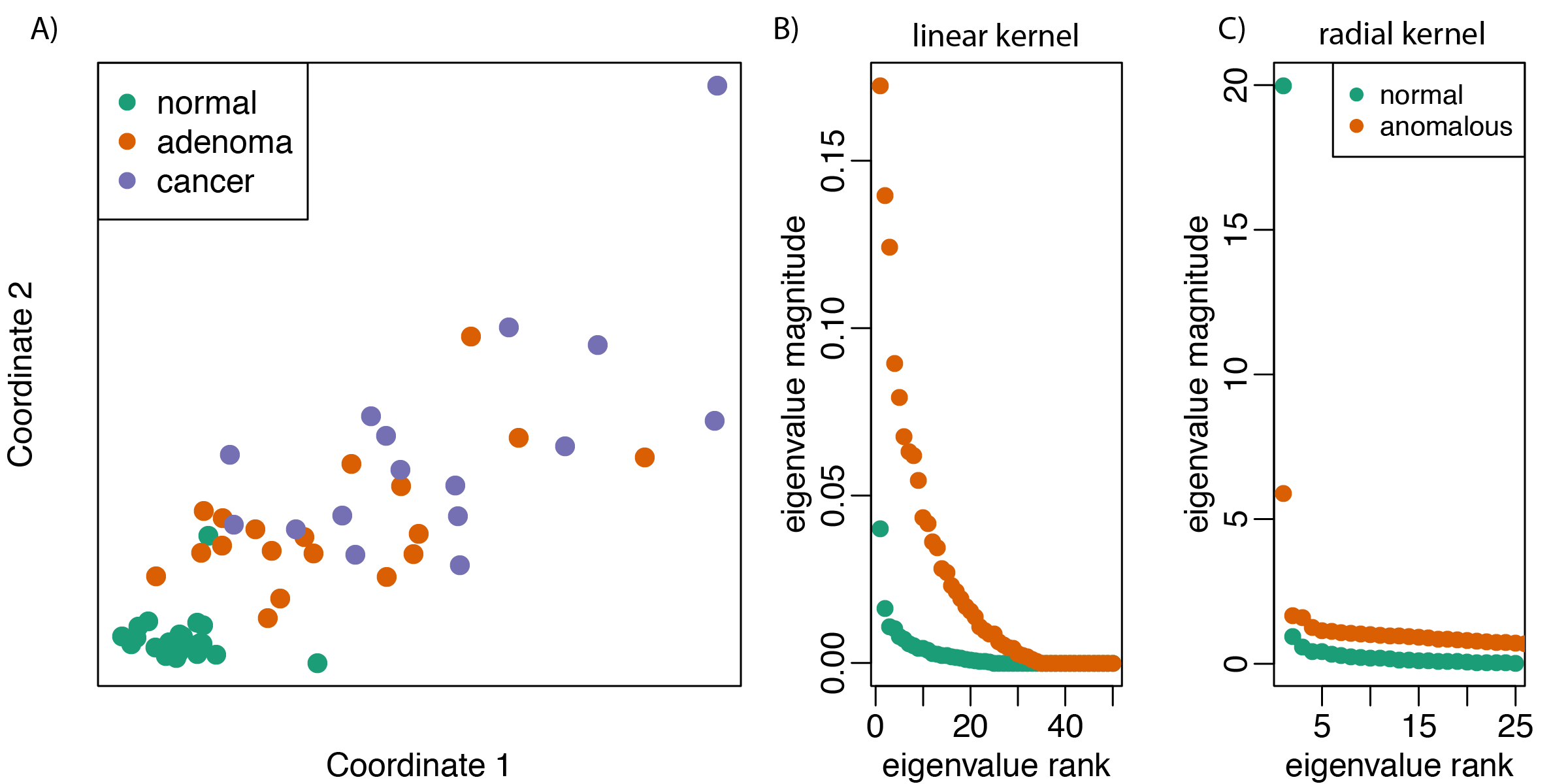}
\caption{(A) Principal component analysis of DNA methylation
  data~\cite{Hansen:2011gu}. Variability in methylation measurements
  increases from normal to benign lesions (adenoma) to malignant
  lesions (cancer). The heterogeneity of the adenoma and cancer
  anomalous samples is the defining feature of the anomaly
  classification setting. (B) and (C) illustration of the
  heterogeneity assumption in Definition~\ref{def:ha}. For both a
linear and radial basis function kernel, the magnitude of eigenvalues
of the kernel matrix is larger for the anomalous classes.}
\end{figure}

Next we seek to formalize the heterogeneity assumption of the anomaly classification problem. Intuitively, the heterogeneity assumption we make is that given random samples of the same size from the normal class and from the anomalous classes, in expectation, the sample covariance of the anomalous samples is always larger than the covariance of the normal samples. We state our assumption in the case of Reproducing Kernel Hilbert Spaces (RKHS) since we will use this machinery throughout the paper~\cite{scholkopf2002lk,wahba1999svm}. Recall that in a Bayesian interpretation of this setting, the kernel function associated with a RKHS serves as the covariance of a Gaussian point process.

\begin{definition}[Heterogeneity Assumption] 
\label{def:ha}
Let $\mathscr{H}$ be a Reproducing Kernel Hilbert Space with
associated kernel function $k$. Let $K^m_Z$ and $K^m_A$ be the kernel
matrices resulting from evaluating the kernel function for a sample of
size $m$ of points in the normal and anomalous classes
respectively. The heterogeneity assumption is that for every integer
$m$, there exists $\epsilon \in \mathbb{R}$, where $0 < \epsilon < 1$ such
that $\mathbf{E} \frac{\det K^m_Z}{\det K^m_A} < \epsilon$.
\end{definition}

Figures 1b and c show that the heterogeneity assumption is satisfied in the
DNA methylation data for both linear and radial basis function
kernels. Each figure shows the magnitude of the eigenvalues of the
resulting kernel matrices. The magnitude of the eigenvalues in both
cases is larger for the anomalous classes.

The heterogeneity assumption gives us a hint to construct classifiers that deal with the heterogeneity of the anomalous classes. In Section~\ref{sec:generalResult} we show that heterogeneity has an impact on robustness and stability of classifiers built from training sets of the anomalous classes. Our goal is to use samples from the stable normal class to create classifiers that are robust. We describe the anti-profile SVM as an extension to Support Vector Machines that accomplishes this goal.

\section{The anti-profile SVM}

Support Vector Machines(SVMs) are one of the primary machine learning tools used for
classification. SVMs operate by learning the maximum-margin separator between
two groups of data provided at training time. Any new observation provided to the
SVM is classified by determining which side of the separator the new observation
lies in. An important advantage of SVMs is that by applying the kernel trick, it
is possible to find a hyperplane in a higher dimensional space where the two
given classes are linearly separable, even when they are not linearly separable
in their original feature spaces, and by virtue of the kernel trick this computation can be 
performed at no significant cost. While primarily designed for binary classification, SVMs have been extended for
many other problems, such as multi-class classification and function estimation.

\subsection{The SVM as weighthed voting of basis functions}
Here we review SVMs from a function approximation perspective~\cite{wahba1999svm}: consider a set of $n$ observations, each observation being drawn from $X \times Y$, where $X \in 
\mathbb{R}^p $, and $Y \in \{-1, 1\}$. Here $p$ is the number of features in each observation, or the 
dimensionality of the feature space. Thus each observation  consists
of a pair $\langle x_i, y_i \rangle$, $x_i \in 
\mathbb{R}^p$ and $y_i \in \{-1, 1\}$, for $i=1..n$; here $y_i$ indicates which of the two classes the observation belongs 
to. If we introduce a new observation $x'$ which needs to be classified, then the classification 
problem amounts to comparing $x'$ to the existing set of points and combining the comparisons to 
make a decision.

To make the comparisons between observations, we make use of a similarity
function. Let $k(x_i, x_j)$ be a positive-definite similarity function which compares two points $x_i, x_j \in \mathbb{R}^p$. Weighing 
the similarity of the new observation to each existing observation, the difference of the sum of 
weighted similarities for the two groups will provide the 
necessary classification: $g(x) = d+ \sum_{i=1}^{n} c_i k(x,
x_i)$. Here $c_i \ge 0 \,\,\,\, \forall i$ is the weight associated with each
point, and $d$ is a bias term. Classification is then based on the
sign of the expansion: $f(x) = sgn \left [ g(x) \right ]$.

%
%

Usually in SVMs function $k$ is further assumed to have the
reproducing property in a Reproducing Kernel Hilbert Space
$\mathcal{H}$ associated with $k$: $\langle f, k(x,\cdot) \rangle_{\mathcal{H}} =
f(x)$ for all $f \in \mathcal{H}$, and in particular $\langle
k(x,\cdot), k(y,\cdot) \rangle_{\mathcal{H}} = k(x,y)$. In this case,
the basis functions in the classifier correspond to representers
$k(x,\cdot)$. In the standard SVM, the representers of all training
points are potentially used as basis functions in the classifier, but
effectively only a small number of representers are used as basis functions, namely the Support Vectors.
However, for a given problem, we may choose a different set of points for the derivation of the set of basis 
functions; the basis functions determine how the similarities are measured for a new point. 
%
\subsection{The Anti-Profile SVM optimization problem}

The core idea in the anti-rofile SVM (apSVM) is to make use of this
characterization of the Support Vector Machine as a linear expansion
of basis functions defined by representers of training samples. In
order to address the heterogeneity assumption underlying the anomaly
classification problem we define basis functions only using samples
from the stable normal class. 

Formally, we restrict the set of functions available to define the
subspace of $\mathcal{H}$ spanned by the representers of samples
$z_1,\ldots,z_m$ from normal class $Z$: $f(x) = d+\sum_{i=1}^m c_i
k(z_i,x)$. To estimate coefficients $c_i$ in the basis expansion we
apply the usual regularized risk functional based on hinge loss

\[
R_{\mathrm{reg}}(f) = \frac{1}{n} \sum_{j=1}^n (1-y_i f(x_i))_+ + \frac{\lambda}{2}
\|h\|^2_{\mathcal{H}},
\]

where $(\cdot)_+=\max(0,\cdot)$, $f(x)$ is defined as $f(x) = d +
h(x)$, and $\lambda>0$ is a regularization parameter.
By the reproducing kernel property, we have in this
case that $\|h\|^2_{\mathcal{H}}=c'K_nc$ where $K_n$ is the kernel
matrix defined on the $m$ normal samples. 

The minimizer of the empirical risk functional is given by the
solution of a quadratic optimization problem, similar to the standard
SVM, but with two kernel matrices used: $K_n$, defined in the previous
paragraph, and $K_s$, which
contains the evaluation of kernel function $k$ between anomalous
samples $x_1,\ldots,x_n$ and normal samples $z_1,\ldots,z_m$:

\begin{eqnarray}
\label{eq:primal_problem}
\min_{d,c,\xi} & e^T\xi + \frac{n\lambda}{2} c^TK_nc \\
\textrm{s.t.} & Y(K_s c + de) + \xi \geq e, \xi \geq 0 \nonumber
\end{eqnarray}

Here we use slack variables $\xi = (\xi_1, \xi_2, ..., \xi_n)'$, denote
the unit vector of size $n$ as $e$, and define matrix $Y$ as the
diagonal matrix such that $Y_{ii}=y_i$.

\subsection{Solving the apSVM optimization problem}
The Lagrangian of problem~\ref{eq:primal_problem} is given by
\[ 
L(c,d,\xi,\alpha,\beta) = e^T \xi + \frac{n\lambda}{2}c^T K_n c - \alpha^T \left [ Y (K_s
  c + de) + \xi - e\right ] - \beta^T \xi 
\]
where $\alpha_{n \times 1} = (\alpha_1, ..., \alpha_n)^T$ and $\beta_{n \times 1} = (\beta_1, ..., \beta_s)^T$ are the Lagrangian multipliers. Minimizing with respect to $z, c$ and $d$, we find that the Wolfe dual
of problem~\ref{eq:primal_problem} is 

\begin{eqnarray}
\label{eq:dual_problem}
\max_{\alpha} & e^T \alpha - \frac{1}{2n\lambda} \alpha^T Y \tilde{K} Y \alpha\\
\textrm{s.t.} & 0 \le \alpha \le e, e^T Y \alpha = 0 \nonumber
\end{eqnarray}

where $\tilde{K} = K_s K_n^{-1}K_s^T$. Here we assume $K_n^{-1}$
represents a pseudo-inverse in the case where $K_n$ is not positive definite.

For a standard SVM, the objective of the Wolfe dual is $e^T \alpha -
\frac{1}{2n\lambda} \alpha^T Y K’ Y \alpha$, with $K$ the kernel matrix the training datapoints.
Thus the dual problem of the apSVM has the same form as the standard
SVM dual problem with the exception that kernel matrix $K$
is replaced by induced kernel matrix $\tilde K$ in the apSVM. Kernel matrix $\tilde K$  
essentially represents an indirect kernel between anomalous samples
induced by the set of basis functions determined by the samples from
the normal class. Since the essential form of the SVM solution is unchanged by the modification,
this provides the additional advantage that the modified SVM can be solved 
by the same tools that solve a regular SVM, but with a different kernel matrix
provided. For our particular problem domain, we use the indirect
kernel to represent deviation from the 
profile of normal samples, and thus refer to this classifier as the anti-profile SVM.

\subsection{Characterizing the indirect kernel}
\label{sec:generalResult}

We saw above that the apSVM can be solved as a standard SVM with induced kernel $\tilde{K} = K_s K_n^{-1}K_s^T$. In this section we characterize this indirect kernel, and state a general result that elucidates how the apSVM can produce classifiers that are more robust and reproducible that a standard SVM in this setting. 

\begin{proposition} Let $P_Z$ be the linear operator that projects
  representers $k(x,.) \in \mathscr{H}$ to the space spanned by the representers of the $m$ normal samples of the anomaly classification problem. Induced kernel $\tilde{k}$ satisfies $\tilde{k}(x,y) = k(P_Z k(x,.), P_Z k(y,.))$.
\end{proposition}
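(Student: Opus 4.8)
The plan is to make the geometry behind the formula $\tilde{K} = K_s K_n^{-1} K_s^T$ explicit by computing the projection $P_Z k(x,\cdot)$ in closed form and then evaluating the resulting Hilbert-space inner product. First I would observe that, since $P_Z$ projects onto the span of the normal representers, the image $P_Z k(x,\cdot)$ must itself be a linear combination $\sum_{a=1}^m \gamma_a(x)\, k(z_a,\cdot)$ for some coefficient vector $\gamma(x) = (\gamma_1(x),\ldots,\gamma_m(x))^T$. The whole argument then reduces to identifying $\gamma(x)$ and substituting.

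To pin down the coefficients I would invoke the defining property of an orthogonal projection: the residual $k(x,\cdot) - P_Z k(x,\cdot)$ is orthogonal to every normal representer $k(z_b,\cdot)$. Taking inner products and applying the reproducing property $\langle k(u,\cdot), k(v,\cdot)\rangle_{\mathscr{H}} = k(u,v)$ turns these orthogonality conditions into the normal equations $K_n\,\gamma(x) = \kappa_x$, where $\kappa_x = (k(x,z_1),\ldots,k(x,z_m))^T$ is precisely the row of $K_s$ associated with $x$. Solving gives $\gamma(x) = K_n^{-1}\kappa_x$.

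The final step is a direct computation. Expanding $\langle P_Z k(x,\cdot), P_Z k(y,\cdot)\rangle_{\mathscr{H}}$ bilinearly and again using the reproducing property yields $\gamma(x)^T K_n\, \gamma(y)$; substituting $\gamma(x) = K_n^{-1}\kappa_x$ and $\gamma(y) = K_n^{-1}\kappa_y$ collapses this to $\kappa_x^T K_n^{-1}\kappa_y$. Recognizing that $\kappa_x$ and $\kappa_y$ are rows of $K_s$ shows this is exactly the corresponding entry of $K_s K_n^{-1} K_s^T = \tilde{K}$, which is the claim.

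The main obstacle is the degenerate case flagged in the statement, where $K_n$ is only positive semidefinite and $K_n^{-1}$ denotes the Moore--Penrose pseudo-inverse. When the normal representers are linearly dependent the coefficient vector $\gamma(x)$ is no longer unique, so I must argue that the \emph{projection} is nonetheless well defined and that the pseudo-inverse recovers it. The key facts to verify are that $\kappa_x$ always lies in the range of $K_n$ (so the normal equations remain consistent, since $\kappa_x = K_n\gamma$ for any valid set of coefficients), and that the identity $K_n^{+} K_n K_n^{+} = K_n^{+}$ lets the inner product collapse to $\kappa_x^T K_n^{+}\kappa_y$ exactly as in the invertible case. Checking this range condition carefully is where the argument demands the most attention.
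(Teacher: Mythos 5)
Your proposal is correct and follows essentially the same route as the paper: both identify the coefficients of $P_Z k(x,\cdot)$ in the span of the normal representers as $K_n^{-1}k_{zx}$ (the paper via the explicit least-squares minimization, you via the equivalent orthogonality/normal-equations characterization) and then expand the inner product using the reproducing property to obtain $k_{zx}^T K_n^{-1} k_{zy}$. Your added care with the rank-deficient case --- checking that $\kappa_x$ lies in the range of $K_n$ and invoking $K_n^{+}K_nK_n^{+}=K_n^{+}$ --- tightens a point the paper only waves at when it declares $K_n^{-1}$ to be a pseudo-inverse.
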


\begin{proof}
Projection $P_Z k(x,.)$ is defined as $P_Z k(x,.) = \sum_{i=1}^m \hat{\beta}_i k(z_i,.)$ where

\begin{eqnarray}
\hat{\beta} & = & \arg \min_{\beta} \frac{1}{2} \|k(x,.) - \sum_i \beta
k(z_i,.) \|^2_\mathscr{H} \nonumber \\
{} & = & \arg \min_{\beta} \frac{1}{2} \langle k(x,.)-\sum_i\beta k(z,.),
k(x,.)-\sum_i\beta k(z_i,.)\rangle_\mathscr{H} \nonumber \\
{} &= & \arg \min_{\beta} \frac{1}{2} \left( \sum_{i,j} \langle k(z_i,.),k(z_j,.)\rangle_\mathscr{H} - \sum_i
\langle k(x,.),k(z_i,.)\rangle_\mathscr{H} \right) \nonumber \\
{} &= & \arg \min_{\beta} \frac{1}{2} \left( \beta^T K_n \beta - k_{zx}^T\beta \right),
\end{eqnarray}

where $k_{zx}$ is the vector with element $i$ equal to
$k(z_i,x)$. From (3) we get $\hat{\beta}=K_n^{-1} k_{zx}$. Therefore
$\langle P_Z k(x,.), P_Z k(y,.)\rangle_\mathscr{H}=k_{zx}^T K_n^{-1} k_{zy}=\tilde{k}(x,y)$.
\end{proof}

This proposition states that the indirect kernel is the inner product
in Reproducing Kernel Hilbert Space $\mathscr{H}$ between the
representers of anomalous samples projected to the space spanned by 
the representers of normal samples. By the heterogeneity assumption of
Definition~\ref{def:ha}, the space spanned by any subset of
anomalous samples will be smaller after the projection. In particular,
the smallest sphere enclosing the projected representers will be
smaller, and from results such as the Vapnik-Chapelle support vector
span rule~\cite{vapnik2000bee}, classifiers built from this projection will be more
robust and stable.


%
%
%

\section{Simulation Study}

We first present simulation results that show that the apSVM obtains
better accuracy in the anomaly classification setting while providing
stable and robust classifiers. We generated samples from three normal distributions as follows: if $A^+$ and 
$A^-$ are the anomalous classes that we need to distinguish, and $Z$
is the normal class, then for a given feature we draw datapoints from distributions 
$Z = N(0, \sigma^2_N), A^- = N(0, \sigma^2_{A^-})$ and $ A^+ =
N(0,\sigma^2_{A^+})$. To simulate our problem setting, we
set $ \sigma^2_Z <\sigma^2_{A^-} < \sigma^2_{A^+} $.

Results have been obtained from tests written on R
(version 2.14) with R packages kernlab (version 0.9-14)~\cite{kernlab} and svmpath 
(version 0.952). The svmpath tool provides a fitting for 
the entire path of the SVM solution to a model at little additional computational cost~\cite{Hastie:2004uj}. Using the 
resulting fit, the SVM classifications for any given cost parameter can be
obtained. For our experiments, the testing set accuracy was computed for each value of 
cost along the regularization path, and the best accuracy possible was obtained; 
ties were broken by considering the option with the least number of support vectors used.
Note that a small ridge parameter (1e-10) was used in the svmpath method to
avoid singularities in computations.

Each training set contained 20 samples from each of $A^-$ and $A^+$ classes, while
each testing set contained 5 samples from each class; 20 samples from
class $Z$ were used for the anti-profile SVM. For a given number of features, each test was run 10 times
and the mean accuracy computed. To estimate the hyperparameter for the radial basis 
kernel, the inverse of the mean distance between 5 normal and 5 anomalous
samples (chosen randomly) was used.

\begin{figure}
\centering
\includegraphics{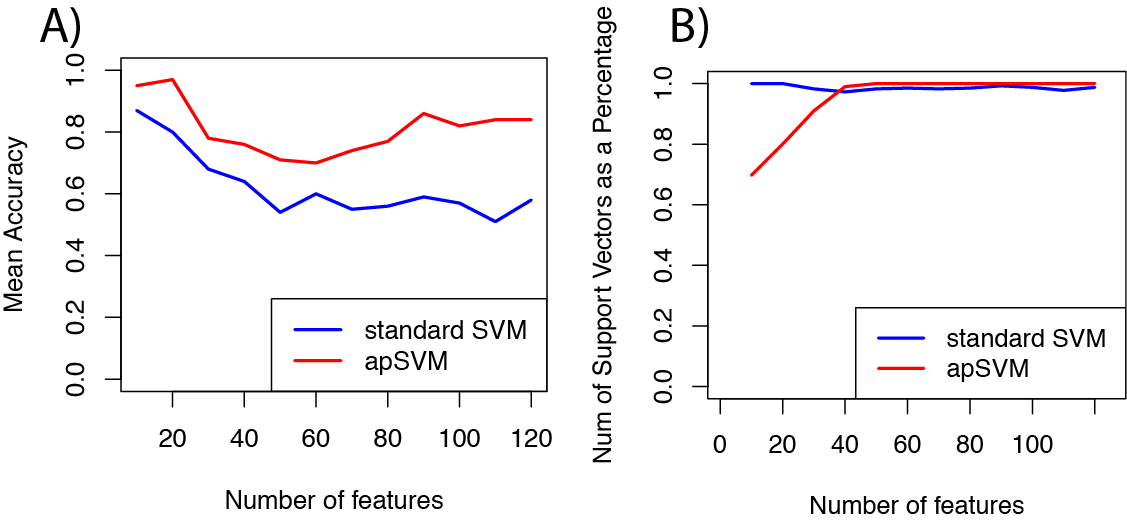}
\caption{(A) Accuracy results in simulated anomaly classification
  data. The anti-profile SVM achieves better accuracy than the
  standard SVM. (B) Stability results in simulated data. The
  anti-profile SVM uses a smaller proportion of training points as
  support vectors. SVMs that use fewer support vectors are more robust
and stable.}
\end{figure}

Figure 2a shows the accuracy of a standard SVM and the apSVM
using an RBF kernel for simulated data with $\sigma_Z=1, \sigma_{A^-}=2, \sigma_{A^+}=4$.  With a radial basis 
kernel, the anti-profile SVM was able to achieve better classification than the regular SVM.

We characterize the stability of a classifier using the proportion of
training samples that are selected as support vectors. Classifiers
that use a small proportion of points as support vectors are more
robust and stable to stochasticity in the sampling process. The more support vectors used by an SVM, the more
likely it is that the classification boundary will change with any changes in
the training data. Hence a boundary that is defined by only a few support vectors
will result in a more robust, reliable SVM. Figure 2b shows that in the simulation study the apSVM used 
fewer support vectors than the standard SVM while obtaining better accuracy. 

\section{Application to cancer genomics}
\label{section:bio}

The motivation for this work is from recent studies of epigenetic mechanisms
of cancer. Epigenetics is the study of mechanisms by which the expression level of a
gene (i.e. the degree to which a gene can exert it's influence) can be modified without
any change in the underlying DNA sequence. Recent results show that certain changes in
DNA methylation are closely associated with the occurrence of multiple
cancer types~\cite{Hansen:2011gu}. In particular, the existence of
highly-variable DNA-methylated regions in cancer as compared to normals(i.e. healthy
tissue) has been shown.  Furthermore, these highly-variable regions are
associated with tissue differentiation, and are present across
multiple cancer types. 
Another important observation made
there is that adenomas, which are benign tumors, show intermediate levels of
hyper-variability in the same DNA-methylated regions as compared to cancer and
normals.

This presents an interesting machine learning problem: 
distinguishing between cancer and adenoma based on the
hyper-variability of their methylation levels with respect to normal samples? 
A successful tool that can classify between the two groups can have far-reaching benefits in 
the area of personalized medicine and diagnostics. Since the two classes are essentially
differentiated by the degree of variability they exhibit with respect to normals, for our 
purpose we can abstract the problem to the setting we present here as
anomaly classification.

\subsection{Methylation data results}

We study the performance of the apSVM in a dataset of DNA methylation
measurements obtained for colon tissue
from 25 healthy samples, 19 adenoma samples, and 16 cancer samples,
for 384 specific positions in the human genome~\cite{Hansen:2011gu}. As mentioned previously, the cancer samples exhibit higher
variance than healthy samples, with adenoma samples showing an
intermediate level of variability (Figure 1).
We used the same classification methods mentioned
in the previous section, but with multiple runs, for each run randomly choosing 80\% of 
tumor samples for training and the remaining for testing. Figure 3 shows the
results obtained using a radial basis kernel. While the indirect kernel performs
either at the same level or marginally better than the regular kernel, then anti-profile SVM 
uses much less support vectors than the standard SVM, thus providing a much more
robust classifier.

\begin{figure}
\centering
\includegraphics{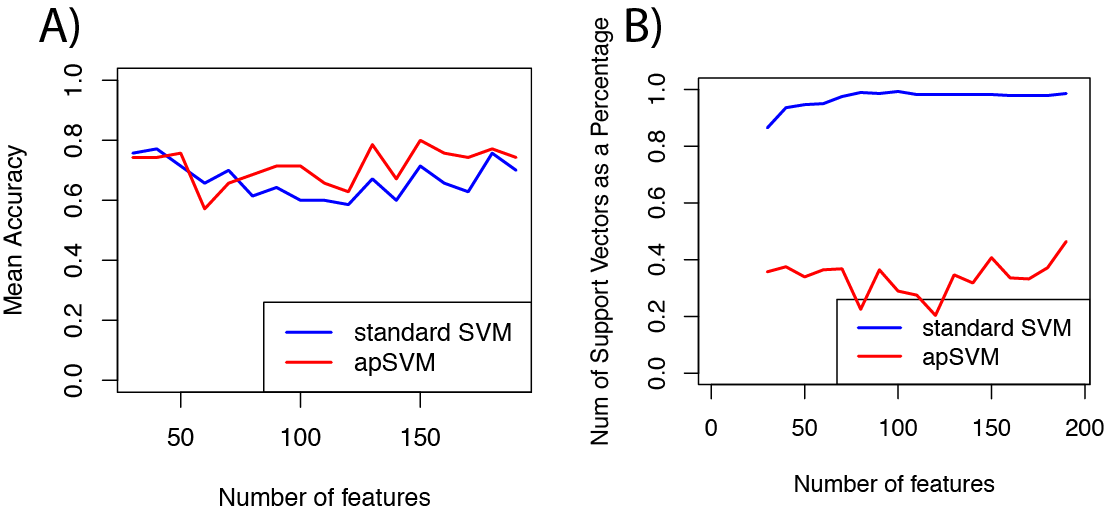}
\caption{Classification results for DNA methylation in
  cancer~\cite{Hansen:2011gu}. While both a standard SVM and the
  anti-profile SVM achieve similar accuracy using an RBF kernel (A),
  the anti-profile SVM uses much fewer support vectors.}
\end{figure}

\subsection{Expression data results}

We further applied our method to gene expression data obtained with a clinical
experiment on adrenocortical cancer~\cite{Giordano:2009bj}. The data contains expression levels for
54675 probesets, for 10 healthy samples, 22 adenoma samples, and 32 cancer
samples. The data shows the same pattern with regard to hyper-variability as the methylation 
data. Using the same methods as before, the results obtained using a linear kernel 
are shown in Figure 4. For feature selection, the
features were ranked according to $\log \frac{\mathrm{var}(Carcinoma)}{\mathrm{var}(Adenoma)}$
and for a given number $n$ as the number of features to be used, $n$ features with
the highest variance ratio were selected. While both the standard SVM
and the apSVM provided almost perfect
classification, there is a significant difference in the number of support
vectors used, with the indirect kernel requiring much fewer support vectors and
hence providing a more stable classifier.

\begin{figure}
\centering
\includegraphics{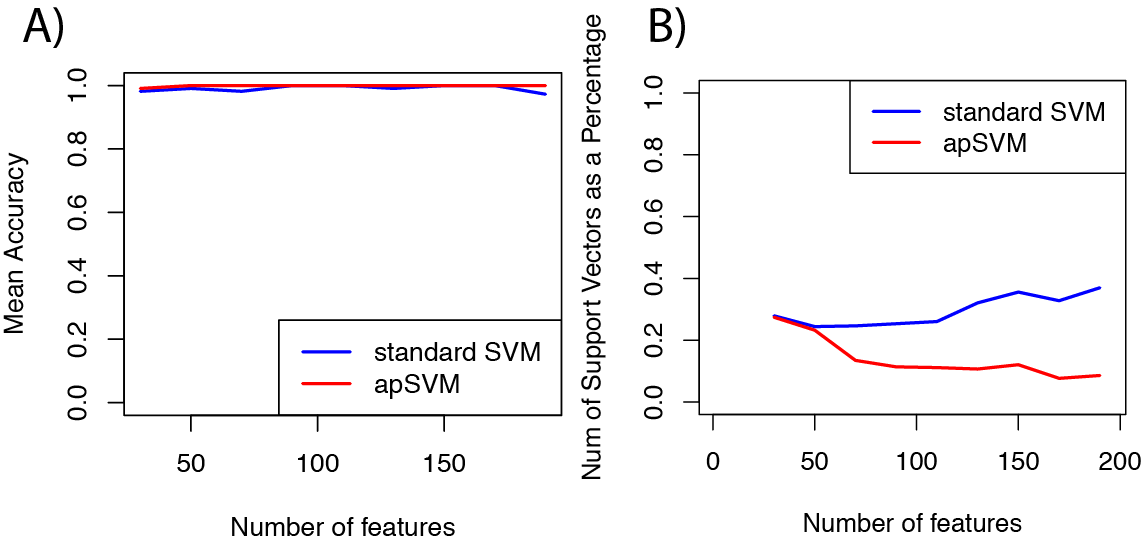}
\caption{Classification results for gene expression in
  cancer~\cite{Giordano:2009bj}. Similar to Figure 3, the accuracy of
  both the standard and anti-profile SVM is similar (in this case
  almost perfect testset accuracy is achieved by both
  classifiers). However, the anti-profile SVM again uses fewer support
vectors, leading to classifiers that are more robust and stable.}
\end{figure}

\section{Discussion}

We have introduced the anti-profile Support Vector Machine as a novel
algorithm to address the anomaly classification problem. We have shown
that under the assumption that the classes we are trying to
distinguish with a classifier are heterogeneous with respect to a
third stable class, we can define a Support Vector Machine based on an
indirect kernel using the stable class. We have shown that the dual of
the apSVM optimization problem is equivalent to that of the standard
SVM with the addition of an indirect kernel that measures similarity
of anomalous samples through similarity to the stable normal
class. Furthermore, we have characterized this indirect kernel as the
inner product in a Reproducing Kernel Hilbert Space between
representers that are projected to the subspace spanned by the
representers of the normal samples. This led to the result that the
apSVM will learn classifiers that are more robust and stable than a
standard SVM in this learning setting. We have shown by simulation and
application to cancer genomics datasets that the anti-profile SVM
does in fact produce classifiers that are more accurate and stable
than the standard SVM in this setting.

While the motivation and examples provided here are based on cancer
genomics we expect that the anomaly classification setting is
applicable to other areas. In particular, we have started looking at
the area of statistical debugging as a suitable application~\cite{Zheng:2003td}. 

The characterization of the indirect kernel through
projection to the normal subspace also suggests other possible
classifiers suitable to this task. For instance, by defining a margin
based on the projection distance directly. Furthermore, connections to
kernel methods for quantile estimation~\cite{Scholkopf:2001jp} will be
interesting to explore.

Another direction of interesting research would be to further solidify
the stability characterization we provide in
Section~\ref{sec:generalResult}. For instance, by exploring the
relationship to other leave-one-out
bounds~\cite{opper2000gpc,joachims2000egp,wahba:rbg,jaakkola1999pkr},
and the span rule for kernel quantile
estimation~\cite{scholkopf2002lk}. 


\bibliography{apsvm}

\end{document}